\DeclareMathOperator*{\argmax}{arg\,max}
\begin{document}
\title{Knowledge Infused Policy Gradients with Upper Confidence Bound for Relational Bandits}

\author{Kaushik Roy\inst{1} \and
Qi Zhang\inst{1} \and
Manas Gaur\inst{1}\and
Amit Sheth\inst{1}}
\authorrunning{F. Author et al.}
%
\institute{
\email{kaushikr@email.sc.edu qz5@cse.sc.edu mgaur@email.sc.edu amit@sc.edu}
Artificial Intelligence Institute, University of South Carolina, Columbia, USA\\
}

\maketitle
\begin{abstract}
 Contextual Bandits find important use cases in various real-life scenarios such as online advertising, recommendation systems, healthcare, etc. However, most of the algorithms use flat feature vectors to represent context whereas, in the real world, there is a varying number of objects and relations among them to model in the context. For example, in a music recommendation system, the user context contains what music they listen to, which artists create this music, the artist albums, etc. Adding richer relational context representations also introduces a much larger context space making exploration-exploitation harder. To improve the efficiency of exploration-exploitation knowledge about the context can be infused to guide the exploration-exploitation strategy. Relational context representations allow a natural way for humans to specify knowledge owing to their descriptive nature. We propose an adaptation of Knowledge Infused Policy Gradients to the Contextual Bandit setting and a novel Knowledge Infused Policy Gradients Upper Confidence Bound algorithm and perform an experimental analysis of a simulated music recommendation dataset and various real-life datasets where expert knowledge can drastically reduce the total regret and where it cannot.
\end{abstract}

\section{Introduction}
Contextual Bandits (CB) are an extension of the classical Multi-Armed-Bandits (MAB) setting where the arm choice depends also on a specific context \cite{langford2007epoch}. 
As an example, in a music recommendation system, the choice of song recommendation (the arm choice) depends on the user context (user preferences concerning genre, artists, etc). In the real world, the context is often multi-relational but most CB algorithms do not model multi-relational context and instead use flat feature vectors that contain attribute-value pairs \cite{zhou2015survey}. While relational modeling allows us to enrich user context, it further complicates the exploration-exploitation problem due to the introduction of a much larger context space. Initially, when much of the space of context-arm configurations are unexplored, aggressive exploitation may yield sub-optimal total regret. Hence, a principled exploration-exploitation strategy that encodes high uncertainty initially that tapers off with more information is required to effectively achieve near-optimal total regret. The Upper-Confidence-Bound (UCB) algorithm uses an additional term to model initial uncertainty that tapers off during each arm pull \cite{lai1985asymptotically}. However, though the UCB provides a reasonable generalized heuristic, the exploration strategy can further be improved with more information about the reward distribution,
for example, if it is known that the expected reward follows a Gaussian distribution. This is what Thompson Sampling does - incorporates a prior distribution over the expected rewards for each arm and updates a Bayesian posterior \cite{thompson1933likelihood}. If external knowledge is available the posterior can be reshaped with knowledge infusion \cite{chapelle2011empirical}. An example of this knowledge for the IMDB dataset described in Section \ref{sec:exp} can be seen in Figure \ref{fig:knowledge} and the detailed formulation for the knowledge used is described in Section \ref{sec:four}.
A couple of issues arise with posterior reshaping: a) The choice of reshaping function is difficult to determine in a principled manner, and b) The form of the prior and posterior is usually chosen to exploit a likelihood-conjugate before analytically compute posterior estimates as sampling is typically inefficient. Similarly, the choice of reshaping function needs to either be amenable to efficient sampling for exploration or analytically computed. Thus, we observe that we can instead directly optimize for the optimal arm choice through policy gradient methods \cite{peters2010policy}. Using a Bayesian formulation for optimization of policy in functional space, we can see that the knowledge infused reshape function can be automatically learned by an adaption of the \emph{Knowledge Infused Policy Gradients} (KIPG) algorithm for the Reinforcement Learning (RL) setting to the CB setting \cite{roy2021knowledge}, which takes as input a state and knowledge, and outputs an action.\\ 
\begin{figure}[!h]
    \centering
    \includegraphics[width=\textwidth]{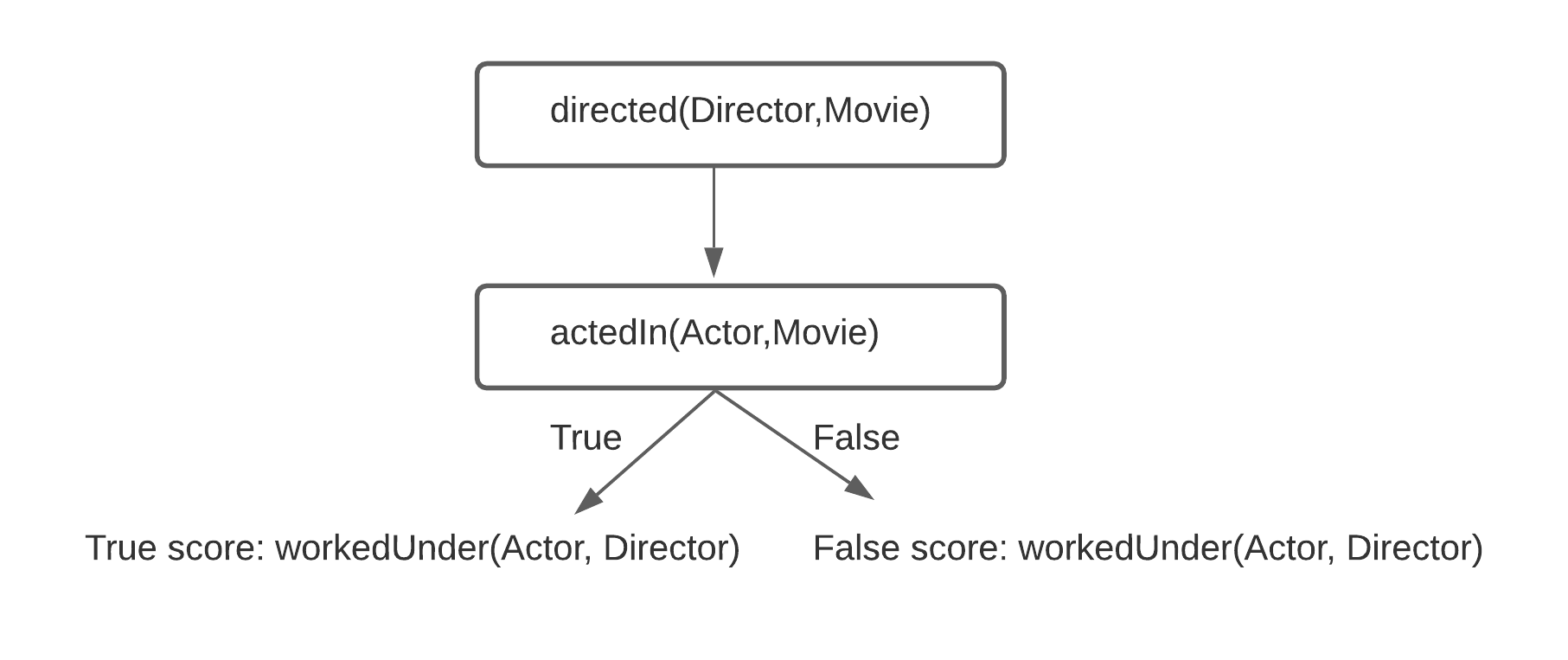}
    \caption{Example of expert knowledge in the IMDB dataset. This says that if a director directed a movie in which an actor acted, there is a chance that the actor worked under the director.}
    \label{fig:knowledge}
\end{figure}
\indent The CB setting presents a unique challenge for knowledge infusion. Since arm pulling happens in an online fashion, the human knowledge about the user is uncertain until the human observes some arm choices. First, we adapt the KIPG algorithm from the RL to the CB setting and then we improve upon it to make it less aggressive in its knowledge infusion strategy when the human is still uncertain about the user's preferences. 
For this reason, we develop a UCB style uncertainty measure that considers the initial uncertainty as the human gathers more information about the user context, before providing knowledge. Thus, we develop a  \emph{Knowledge Infused Policy Gradient Upper Confidence Bound} (KIPGUCB) algorithm to incorporate human uncertainty in providing knowledge in the knowledge infusion strategy.
Our methodological contributions are as follows:
\begin{itemize}
    \item We adapt KIPG for the RL setting to the CB setting to reduce the total regret with high-quality knowledge.
    \item We develop a novel relational CB algorithm KIPGUCB that reduces regret through knowledge infusion with both high-quality and noisy knowledge using exploration.
    \item Theoretically, we observe that KIPG is fundamentally a gradient ascent method and derive a regret bound that depends on the knowledge. We also derive a confidence bound for when the knowledge is noisy.
    \item Empirically, through experiments on various real-life datasets, we perform analysis of settings where KIPGUCB achieves a drastic reduction in total regret. We compare KIPGUCB to KIPG without a confidence bound and compare against the Relational Boosted Bandits algorithm (RB2) \cite{kakadiya2020relational}, a state-of-the-art contextual bandit algorithm for relational domains.
\end{itemize}

\section{Problem Setting}
We consider the problem setting of Bernoulli Contextual Bandits with relational features.
Formally, at each step $k$, when an arm $i\in[N]:=\{1,2,...,N\}$ is pulled from among $N$ arms, the reward $r_{k}(i) \in \{0,1\}$ is Bernoulli. Also, pulling an arm $i$ depends on a relational context $c_{k}(i)$. Since $\pi_{k}(i)$, which represents the probability of choosing arm $i$ given context $c_{k}(i)$, is expected to be high if $P(r_{k}(i)=1|c(i))$ is high, we directly maximize the total reward over $K$ arm choices, $\sum_{k=1}^{K}\pi_{k}(i)r_{k}(i)$.
Here $\pi_{k}(i)=\sigma(\Psi_{k}(i))$, and $\sigma$ is the sigmoid function. $\Psi_{k}(i)$ is a relational function that includes the relational context $c_{k}(i)$.

\section{Knowledge Infused Policy Gradients}
In this section, we develop the formulation for the KIPG adaptation to the CB setting.
We first describe policy gradients for CB, extend it to functional spaces and then use Bayes rule to derive the KIPG formulation. 
In next section, we show the connection of KIPG to Thompson Sampling with posterior reshaping and the Exponential Weight for Exploration and Exploitation (Exp3) algorithm \cite{auer2002nonstochastic}, which is also derived from a gradient ascent procedure (mirror ascent) that can be seen as an instance of KIPG. 

\subsubsection{Policy Gradients for Contextual Bandits with Flat Feature Vectors}
In policy gradient methods the probability of picking an arm $i$ given context $c(i)$, is parameterized as $\pi(i) = \sigma(\theta(i)^{T}c(i))$. 
\sloppy We want to maximize the expected reward over $K$ arm pulls $\sum_{k=1}^{K}\pi_{k}(i)r_{k}(i)$. We update the parameters for arm $i$, at each $k+1$, using gradient ascent as 
$\theta_{k+1}(i) = \theta_{k}(i) + \eta\nabla_{\theta_{k}(i)}(\sum_{k}\pi_{k}(i)r_{k}(i))$. 
Here we note that the gradient $\nabla_{\theta_{k}(i)}\pi_{k}(i) = \pi_{k}(i)\nabla_{\theta_{k}(i)}\log(\pi_{k}(i))$ and thus we optimize:
\[\theta_{k+1}(i) = \theta_{k}(i) + \eta(\sum_{k}\pi_{k}(i)\nabla_{\theta_{k}(i)}\log(\pi_{k}(i))r_{k}(i))\]
\subsubsection{Policy Gradients for Contextual Bandits in Functional Space}
In functional space the $\theta(i)^{T}c(i)$ is replaced by a function $\Psi(i)$ i.e. $\pi(i) = \sigma(\Psi(i))$, where $\Psi(i)$ is a relational function that includes context $c(i)$. Thus, the policy gradient update becomes
\[\Psi_{k}(i) = \Psi_{k}(i) + \eta(\sum_{k}\pi_{k}(i)\nabla_{\Psi_{k}(i)}\log(\pi_{k}(i))r_{k}(i)).\]
Here, $\Psi_{k}(i)$ at each iteration of policy gradients is grown stage wise. We start with a $\Psi_{0}(i)$ and update $\Psi_{K}(i) = \Psi_{0}(i) +  \sum_{k=1}^{K}\eta\delta_{k}(i)$, where each $\delta_{k}(i)$ fits a function to $\pi_{k}(i)\nabla_{\Psi_{k}(i)}\log(\pi_{k}(i))r_{k}(i)$ \cite{kersting2008non}. In our experiments this function is a TILDE regression tree \cite{blockeel1998top}.
However, we derive a Bayesian formulation for $\pi_{k}(i)$ for knowledge infusion. Thus, After pulling an arm $i$ at step $k$, and observing rewards $r_{k}(i)$, and context $c_{k}(i)$, using Bayes rule we can write \[P(\Psi_{k}(i)|r_{k}(i)) = \frac{P(r_{k}(i)|\Psi_{k}(i))P(\Psi_{k}(i))}{\int_{\Psi_{k}(i)}P(r_{k}(i)|\Psi_{k}(i))P(\Psi_{k}(i))}.\]
Using the sigmoid function we can set $P(r_{k}(i)|\Psi_{k}(i)) = \sigma(\Psi_{k}(i)) = \frac{e^{\Psi_{k}(i)}}{(1+e^{\Psi_{k}(i)})}$ and use the Bayesian posterior to obtain a prior informed policy as
\[\pi_{k}(i) = \frac{\sigma(\Psi_{k}(i))P(\Psi_{k}(i))}{\int_{\Psi_{k}(i)}\sigma(\Psi_{k}(i))P(\Psi_{k}(i))}.\]
\sloppy To optimize using policy gradients, again we note that $\nabla_{\Psi_{k}(i)}(\pi_{k}(i)) = \pi_{k}(i)\nabla_{\Psi_{k}(i)}\log(\pi_{k}(i))$
If we use a form for $P(\Psi_{k}(i))$, for which the normalization doesn't depend on $\Psi_{k}(i)$ such as a Laplace or a Gaussian distribution, we can take the log on both sides without loss of generality to derive the gradient $\nabla_{\Psi_{k}(i)}\log(\pi_{k}(i))$:
\[\log(\pi_{k}(i)) \propto \log(\sigma(\Psi_{k}(i))) + \log(P(\Psi_{k}(i))),\] taking the gradient gives us
\[(I_{k}(i) - \sigma(\Psi_{k}(i))) +\nabla_{\Psi_{k}(i)}\log(P(\Psi_{k}(i))),\] where $I_{k}(i)$ is the indicator function representing if arm $i$ was chosen at step $k$.
Now we can employ functional gradient ascent by fitting a weak learner (such as a TILDE tree for relational context, or linear function for propositional context) to the gradient $\pi_{k}(i)\nabla_{\Psi_{k}(i)}\log(\pi_{k}(i))$. Note here that $\log(P(\Psi_{k}(i)))$ will determine the nature of knowledge infused into the policy gradient learning setup at each $k$. We call this approach Knowledge Infused Policy Gradients (KIPG).
\section{Formulation of Knowledge Infusion}{\label{sec:four}}
 At each $k$, the prior over functions $\Psi_{k}(i)$ for each arm $P(\Psi_{k}(i))$ determines the knowledge infusion process. We now show the formulation for infusing arm preferences as knowledge as we use this in our experiments. Depending on the problem needs, the user may pick their choice of $P(\Psi_{k}(i))$ to be any distribution. Since our knowledge is given as weighted preferences over arm choices, we will cover two intuitive ways to formulate the knowledge and derive the formulation we use in our experiments.
 
 \paragraph{$P(\Psi_{k}(i)) = Normal(\mu,\Sigma)$: }Given a context included in $\Psi_{k}(i)$, if we want to prefer the arm choice $i$, we can specify this knowledge using a two step procedure. 
 First we set $\Psi_{k}(i)_{knowledge} = \alpha$, where $\alpha \geq 1$. Then we set $P(\Psi_{k}(i))  = Normal(\mu = \Psi_{k}(i)_{knowledge}-\sigma(\Psi_{k}(i)),\Sigma=I)$. Similarly if the arm choice $i$ is not preferred, $\Psi_{k}(i)_{knowledge} = -\alpha$. Here $\alpha$ controls how quickly knowledge infusion takes place. 
 \paragraph{$P(\Psi_{k}(i)) = Laplace(x,b)$: }Specifying $\alpha$ is a tricky thing to do for a human and we would like them to able to just simply specify preference over arm choice given a context instead, if they are an expert. To model an expert \begin{itemize}
 \item First we set $\Psi_{k}(i)_{knowledge} = {\rm LUB}\{\alpha\}$, where ${\rm LUB}\{\alpha\}$ stands for the least upper bound from among a set of $\alpha \in \mathbf{\{\alpha\}}$. The interpretation is that $\alpha$ has to be at least that high to qualify as expert knowledge. We set ${\rm LUB}\{\alpha\} = K\cdot\max{\pi_{k}(i)\nabla_{\Psi_{k}(i)}\log(\pi_{k}(i))r_{k}(i)} = K\cdot 1 \cdot K = K^{2}$ as the maximum value of $\pi_{k}(i) = 1$ and the maximum value of $\nabla_{\Psi_{k}(i)}\log(\pi_{k}(i))\cdot r_{k}(i)$ is $1\cdot K$ as the maximum value of $\sum_{k=1}^{K}r_{k}(i) = K$. Thus we set $\Psi_{k}(i)_{knowledge} = {\rm LUB}\{\alpha\} = K^{2}$. The interpretation is the human has to be at least as sure as the correction required to the error in arm choice i.e. the max gradient to qualify as an expert. Therefore to prefer arm $i$, $\alpha = K^{2}$ and if arm $i$ is not preferred, $\alpha = -K^{2}$.
 \item Next, we replace the $Normal(\mu,\Sigma)$ distribution with the $Laplace(x = |\Psi_{k}(i)_{knowledge}-\Psi_{k}|,b=1)$ distribution. Thus, we obtain that $\nabla_{\Psi_{k}(i)}\log(P(\Psi_{k}(i)) = {\rm sign}(\Psi_{k}(i)_{knowledge} - \sigma(\Psi_{k}(i))) = \pm 1$. If the expert prefers the arm $i$, $\delta_{k}(i) = \delta_{k}(i) + 1$ and if the expert does not prefer the arm $i$, $\delta_{k}(i) = \delta_{k}(i) - 1$. This is very intuitive as it means that the $\Psi_{k}(i)$, representing chance of arm $i$ being pulled is simply increased or decreased by an additive factor depending on expert's preference, thus preventing the need to carefully specify $\alpha$.
 \item With this insight, it suffices for the human expert to specify knowledge as a tuple \[\mathbf{knowledge: (c_{k}(i), prefer(i) = \{0,1\})},\] which simply means that at step $k$, given the context $c_{k}(i)$, arm $i$ is either preferred ($prefer(i) = 1)$) or not preferred ($prefer(i) = 0$). This is much more natural and easy for the expert human to specify. Note that if the human had a reason to specify $\alpha$ quantifying how quickly they want the knowledge infusion to take place depending on how sure they are (expert level), we can use the $Normal$ or $Laplace$ distribution form to specify without the use of ${\rm LUB}\{\alpha\}$. Algorithm \ref{alg:BKIPG} shows the pseudocode for KIPG with expert knowledge infusion.
 Also, we add $1$ to $r_{k}(i)$ so that the gradient doesn't vanish when $r(i) = 0$.
 \end{itemize}
 
  \subsubsection{Example of knowledge in the IMDB dataset using the Laplacian Formulation} At a step $k$, we can define knowledge over
  the actors set $\mathbf{A} =x \{actor1,actor2,actor3,..\}$ with respect to a directors set $\mathbf{D}=\{director1,director2,..\}$ and a movies set $\mathbf{M}=\{movie1,movie2,..\}$ as, \[\mathrm{(directed(\mathbf{D},\mathbf{M}) \land actedIn(\mathbf{A},\mathbf{M}), prefer(workedUnder(\mathbf{A},\mathbf{D})) = 1)}.\] This means that \emph{The set of actors $\mathbf{A}$, worked under the set of directors $\mathbf{D}$, in the movies in the set $\mathbf{M}$}. In this example, $(directed(\mathbf{D},\mathbf{M}) \land actedIn(\mathbf{A},\mathbf{M})$ is the context $c(i)$, $i$ is the arm label workedUnder.  
 \begin{algorithm}[t]
\caption{Knowledge Infused Policy Gradients - KIPG}\label{alg:BKIPG}
\begin{algorithmic}[1]
\State Initialize $\Psi_{0}(i) = 0~\forall$ arms $i$
\For{$k \gets 1$ to $K$}
\State set $\pi_{k}(i) = \sigma(\Psi_{k-1}(i))$
\State Draw arm $i^{*} = \argmax_{i}i\sim\pi_{k}(i)$
\Comment observe reward $r_{k}(i^{*})$ and context $c_{k}(i^{*})$
\State Compute $\nabla_{\Psi_{k}(i^{*})}\log(\pi_{k}(i^{*}))$ as
\Comment{$\pm$ Depending on preference}
\begin{equation*}
    \begin{aligned}
    (I_{k}(i^{*})-\pi_{k}(i^{*}) \pm 1)
    \end{aligned}
\end{equation*}
\State Compute gradient as $\pi_{k}(i^{*})\nabla_{\Psi_{k}(i^{*})}\log(\pi_{k}(i^{*}))(r_{k}(i^{*}) + 1)$
\Comment add $1$ smoothing
\State Fit $\delta_{k}(i^{*})$ to gradient using TILDE tree
\State Set $\Psi_{k}(i^{*}) = \Psi_{k-1}(i^{*}) + \eta\delta_{k}(i^{*})$
\EndFor
\State return $\pi_{K}(i)$
\end{algorithmic}
\end{algorithm} 
 \subsubsection{Connection with Previous Work on Relational Preferences}
 Odom et al. \cite{odom2015knowledge} have previously specified relational preference knowledge in supervised learning and imitation learning settings. Using their approach, at step $k$, the knowledge would be incorporated by an additive term to the gradient term $(I_{k}(i)-\sigma(\Psi_{k}(i)))$. This term is $n_{k}(i)_{t} - n_{k}(i)_{f}$, where $n_{k}(i)_{t}$ is the number of knowledge sources that prefer arm $i$ and $n_{k}(i)_{f}$ is the number of knowledge sources that do not prefer arm $i$, at step $k$.  We prove in Theorem \ref{theorem: th1} that the approach of Odom et al. \cite{odom2015knowledge} is a specific instance of KIPG with multiple knowledge sources. For our experiments, we specify only a single source of knowledge at all steps $k$.

 \begin{theorem}\label{theorem: th1}
 At step $k$, For $S$ multiple knowledge sources, that either prefer or don't prefer arm $i$, $k1,k2,..kS$, assuming independence, let $P(\Psi_{k}(i)) = \prod_{s=1}^{S}Laplace(|\Psi_{k}(i)-\Psi_{k}(i)_{ks}|,b=1)$. Here $\Psi_{k}(i)_{ks} = \Psi_{k}(i)_{knowledge}~\forall s \in S$. Then we have $\nabla_{\mathbf{\Psi_{k}}}\log(\pi_{k}(i)) = n_{k}(i)_{t} - n_{k}(i)_{f}$.
 \end{theorem}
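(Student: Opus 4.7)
The plan is to reduce the statement to the fact, already established in Section 3, that for priors with $\Psi_k(i)$-independent normalization one has
\[
\nabla_{\Psi_k(i)}\log\pi_k(i) \;=\; (I_k(i) - \sigma(\Psi_k(i))) \;+\; \nabla_{\Psi_k(i)}\log P(\Psi_k(i)),
\]
so that comparing with Odom et al.\ amounts to showing that the prior-induced term equals $n_k(i)_t - n_k(i)_f$. In other words, the theorem is an assertion about the knowledge contribution only, and I would phrase the proof as an explicit computation of $\nabla_{\Psi_k(i)}\log P(\Psi_k(i))$ under the stated factorized Laplace prior.

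The main computation is straightforward. By independence the log-prior decomposes as
\[
\log P(\Psi_k(i)) \;=\; \sum_{s=1}^{S}\bigl[\,-\log(2b) \;-\; b^{-1}\bigl|\Psi_k(i) - \Psi_k(i)_{ks}\bigr|\,\bigr],
\]
and specializing to $b=1$ the normalization drops out under differentiation. Because $|x|$ is differentiable away from $0$ with derivative $\operatorname{sign}(x)$, I would obtain
\[
\nabla_{\Psi_k(i)}\log P(\Psi_k(i)) \;=\; \sum_{s=1}^{S}\operatorname{sign}\!\bigl(\Psi_k(i)_{ks} - \Psi_k(i)\bigr).
\]
Each summand is a single $\pm 1$ depending on whether source $s$ prefers arm $i$ or not, so the whole sum collapses to (number preferring) minus (number not preferring), i.e.\ $n_k(i)_t - n_k(i)_f$.

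The only real obstacle is justifying that the sign is well-defined and has the intended value. I would appeal to the Laplace formulation in Section \ref{sec:four}: each $\Psi_k(i)_{ks}$ is set to $+K^2$ for a preferring source and to $-K^2$ for a non-preferring one, and $K^2$ is chosen as the least upper bound on the achievable magnitude of the functional gradient, so in particular $|\Psi_k(i)| < K^2$ at every step $k$. Consequently $\Psi_k(i)_{ks} - \Psi_k(i)$ is strictly positive for preferring sources and strictly negative for non-preferring ones, making the sign well-defined and identifying its value. Summing and adding the base term $I_k(i) - \sigma(\Psi_k(i))$ recovers exactly the update rule of Odom et al., establishing the claimed identity and exhibiting their formulation as the $S$-source instantiation of KIPG.
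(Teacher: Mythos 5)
Your proposal is correct and follows essentially the same route as the paper: differentiate the factorized Laplace log-prior so the knowledge term becomes $\sum_{s=1}^{S}\pm 1$, with each sign determined by whether source $s$ prefers arm $i$, which collapses to $n_{k}(i)_{t}-n_{k}(i)_{f}$. Your only addition is the explicit justification that each sign is well-defined via the ${\rm LUB}\{\alpha\}=K^{2}$ magnitude argument, a detail the paper's proof asserts without elaboration (and states with $\sigma(\Psi_{k}(i))$ in place of $\Psi_{k}(i)$), so the two arguments are otherwise the same.
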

 \begin{proof}
 We know that with assuming a $Laplace(x,b)$ distribution and setting $\Psi_{k}(i)_{ks} = \Psi_{k}(i)_{knowledge} = {\rm LUB}\{\alpha\}~\forall s \in S$, we get $\nabla_{\Psi_{k}(i)}\log(P(\Psi_{k}(i))) = \sum_{s=1}^{S}{\rm sign}({\rm LUB}\{\alpha\}-\sigma(\Psi_{k}(i)))$. We know also that ${\rm sign}({\rm LUB}\{\alpha\}-\sigma(\Psi_{k}(i))) = \pm 1$ depending on if the expert prefers the arm $i$ or not. Thus we get,$\sum_{s=1}^{S}{\rm sign}({\rm LUB}\{\alpha\}-\sigma(\Psi_{k}(i))) =  n_{k}(i)_{t} - n_{k}(i)_{f}$. 
 \end{proof}

\subsubsection{Connection with Thompson Sampling}
We now formalize the connection between Thompson Sampling with posterior reshaping and KIPG. For arm $i\in[N]$, at every step of arm pulling $k \in [K]$, a reward $r_{k}(i)$ and a context $c_{k}(i)$ is emitted. In Thompson Sampling, the posterior $P(\Theta_{k}(i)|r_{k}(i),c_{k}(i))$ for parameter $\Theta_{k}(i)$ representing $P(r_{k}(i)|c_{k}(i))$ is updated at each step $k$ as 
\[\frac{P(r_{k}(i)|\Theta_{k}(i),c_{k}(i))\Pr(\Theta_{k}(i)|c_{k}(i))}{\int_{\Theta_{k}(i)}P(r_{k}(i)|\Theta_{k}(i),c_{k}(i))\Pr(\Theta_{k}(i)|c_{k}(i))}.\]
 Finally, the optimal arm choice corresponds to the arm that has the max among the sampled $\Theta_{k}(i) \sim P(\Theta_{k}(i)|r_{k}(i),c_{k}(i))$ for each arm $i$.
 \sloppy The posterior $P(\Theta_{k}(i)|r_{k}(i),c_{k}(i))$, can be reshaped for example by using $P(\Theta_{k}(i) = \mathbf{F}(\Theta_{k}(i)|r_{k}(i),c_{k}(i))$. The reshaping changes the sufficient statistics such as mean, variance, etc. This $\mathbf{F}$ can be informed by some knowledge of the domain. We encounter a couple of issues with Posterior Reshaping for knowledge infusion. First, that the choice of $\mathbf{F}$ is difficult to determine in a principled manner. Second, the choice of $\mathbf{F}$ must be determined such that it is amenable to sampling for exploration. Sampling itself is very inefficient for problems of appreciable size. Thus, we observe that we can instead directly optimize for the optimal arm choice through policy gradient methods. Using a Bayesian formulation for optimization of policy in functional space, we can see that the reshaped posterior after $K$ iterations of arm pulling (where $K$ is sufficiently high), corresponds to learning an optimal function $\Psi(i)$ since $\Psi(i)$ is high if $\mathbf{F}(\Theta_{k}(i)|r_{k}(i),c_{k}(i))$, representing $P(r(i)=1|c(i))$, is high.

\subsubsection{Connection with Exp3}
Exp3 maximizes the total expected reward over $K$ arm pulls $ f = \sum_{k=1}^{K}\pi_{k}(i)r_{k}(i)$. Using the proximal definition of gradient descent and deriving the mirror descent objective after each arm pull, we have
 \[\pi_{k}(i) = \argmax_{\pi(i)}((\gamma\cdot\pi(i)\cdot\nabla_{\pi_{k-1}(i)}(f)) + \mathcal{D}(\pi_{k-1}(i),\pi(i))).\], where $\gamma$ is the learning rate. Choosing $\mathcal{D}(\pi(i),\pi_{k-1}(i)) = \Phi(\pi_{k-1}(i)) - (\Phi(\pi(i)) + \nabla\Phi(\pi_{k-1}(i))(\pi_{k-1}(i)-\pi(i)))$, where $\Phi$ is a convex function, we get
 \[\nabla\Phi(\pi_{k}(i)) = \nabla\Phi(\pi_{k}(i)) + \gamma\cdot\nabla_{\pi_{k-1}(i)}(f).\]
 Since $\pi$ is a probability we need to choose a convex $\Phi$ such that it works with probability measures. So we will choose $\Phi(\pi) = \sum_{i}\pi(i) \log \pi(i)$ to be negative entropy and we have
 \[\log(\pi_{k}(i)) = \log(\pi_{k-1}(i)) + \gamma\cdot\nabla_{\pi_{k-1}(i)}(f).\] 
 Setting $\pi_{k-1}(i) = \sigma(\Psi_{k}(i))$, we get,
 \[\log(\pi_{k}(i)) \propto \log(\sigma(\Psi_{k}(i))) + \log(e^{\gamma\cdot\nabla_{\pi_{k-1}(i)}(f)}),\] where $\log P(\Psi_{k}(i)) = \log(e^{\gamma\cdot\nabla_{\pi_{k-1}(i)}(f)})$. Thus we see that \emph{Exp3 can be seen as a case of applying a specific prior probability in KIPG}.

 \section{Regret Bound for KIPG}
 We now derive a bound for the total regret after $K$ steps of KIPG to understand the convergence of KIPG towards the optimal arm choice. Since KIPG is fundamentally a gradient ascent approach, we can use analysis similar to the regret analysis for online gradient ascent to derive the regret bound \cite{hazan2008adaptive}. Using $a^{2}-(a-b)^{2}=2ab-b^{2}$ and letting $a=(\Psi_{k}(i)-\Psi^{*}(i))$ and $b = \nabla_{\Psi(i)_{k}}\sum_{k=1}^{K}\pi_{k}(i)r_{k}(i)$, We know that for a sequence over $K$ gradient ascent iterations, $\{\Psi_{k}(i)|k\in[K]\}$, we have
 \begin{align*}
 (\Psi_{k}(i)-\Psi^{*}(i))^{2}
 \leq (\Psi_{k-1}(i)-\Psi^{*}(i))^{2} - 2\gamma(\pi_{k}(i)r_{k}(i) -  \pi^{*}(i)r(i^{*})) + \gamma^{2}\mathcal{L}
 \end{align*}
 where $\mathcal{L} \geq \nabla_{\Psi_{k}(i)}\sum_{k=1}^{K}\pi_{k}(i)r_{k}(i)$ is an upper bound on the gradient (Lipschitz constant) 
 and $\gamma$ is the learning rate. 
 Using a telescoping sum over $K$ iterations we have
 \begin{align*}
     (\Psi_{K}(i)-\Psi^{*}(i))^{2} \leq (\Psi_{0}(i)-\Psi^{*}(i))^{2}  - 2\sum_{k=1}^{K}(\gamma(\pi_{k}(i)r_{k}(i) - \pi^{*}(i)r(i^{*}))) + \sum_{k=1}^{K}\gamma^{2}\mathcal{L}
 \end{align*}
and therefore
\[\sum_{k=1}^{K}(\gamma(\pi_{k}(i)r_{k}(i) - \pi^{*}(i)r(i^{*}))) \leq \frac{\max_{\Psi_{k}(i)}(\Psi_{k}(i)-\Psi^{*}(i))^{2} + \mathcal{L}^{2}\sum_{k=1}^{K}\gamma^{2}}{2\sum_{k=1}^{K}\gamma}.\] Solving for $\gamma$ by setting $\nabla_{\gamma}(R.H.S) = 0$, we finally have our total regret bound over $K$ steps as:
\[\sum_{k=1}^{K}(\gamma(\pi_{k}(i)r_{k}(i) - \pi^{*}(i)r(i^{*}))) \leq \frac{\max_{\Psi_{k}(i)}(\Psi_{k}(i)-\Psi^{*}(i))^{2}\mathcal{L}}{\sqrt{K}}.\] 
This regret bound has a very intuitive form. It shows that the regret is bounded by how far off the learned $\Psi(i)$ from the true $\Psi^{*}(i)$ for each arm $i$. Thus we expect that in the experiments, with quality knowledge infusion this gap is drastically reduced over $K$ steps to result in a low total regret.
\section{KIPG-Upper Confidence Bound}
So far we have developed KIPG for the Bandit Setting and derived a regret bound. Since KIPG estimates $\pi(i)$ after each arm pull, we can sample from $\pi(i)$ and choose the max like in Thompson Sampling. However, since the arm to pull is being learned online, the uncertainty in the arm choice even with knowledge needs to be modeled. The human providing knowledge needs to observe a few user-arm pulls to gradually improve their confidence in the knowledge provided. As the data is not available offline to study by the human, it is unlikely that the knowledge provided is perfect initially. Thus, we now derive a confidence bound to quantify the uncertainty in the arm choice.
At step $k$, let the arm choice be denoted by $i^{*}$.
First we notice that $Z = |\pi_{k}(i^{*})-\pi^{*}(i^{*})|$, is binomial distributed at step $k$. Also, $\pi_{k}(i^{*})$ is binomial distributed. However, for both we will use a Gaussian approximation and note that for this Gaussian, $\mu(Z) = 0$ and $\sigma(Z) \leq \mathbb{E}[(\pi_{k}(i^{*}) - \pi^{*}(i^{*}))^{2}]$, thus making this a \emph{sub-Gaussian}\cite{peizer1968normal,buldygin1980sub}. Using Markov's inequality we have \cite{cohen2015markov}:
\[P(Z > \epsilon) \leq e^{-k\epsilon}\mathbb{E}[kZ]\implies P(e^{kZ} > e^{k\epsilon}) \leq \mathbb{E}[e^{kZ}]\cdot e^{-k\epsilon}\] where $e^{kZ}$ is the moment-generating-function for $Z$. We know that $e^{kZ}$ is convex and thus $e^{kZ} \leq \gamma(e^{kb}) + (1-\gamma)e^{ka}$ for $Z \in [a,b]$ and $\gamma \in [0,1]$. Thus we obtain
$Z \leq \gamma b + (1-\gamma)a$, which gives us
$\gamma \geq \frac{Z-a}{b-a}$, therefore we know
\[e^{kZ} \leq \frac{-ae^{kb}+be^{ka}}{b-a} + \frac{Z(e^{kb}-e^{ka})}{b-a}.\]
Taking expectation on both sides we get
$\mathbb{E}[e^{kZ}] \leq \frac{-ae^{kb}+be^{ka}}{b-a}$.
Let $e^{g(k)} = \frac{-ae^{kb}+be^{ka}}{b-a}$, we get $g(k) = ka + \log(b-ae^{k(b-a)}) - \log(b-a)$. 
Using Taylor series expansion for $g(k)$ upto the second order term as $g(0) + \nabla(g(k))k + \frac{\nabla^{2}(g(k))k^{2}}{2}$, 
we get
\[\nabla^{2}(g(k)) = \frac{ab(b-a)^{2}(-e^{k(b-a)})}{(ae^{k(b-a)}-b)^{2}}.\]
We note that
$
    ae^{t(b-a)} \geq a
    \implies ae^{t(b-a)}-b \geq a-b
    \implies (ae^{t(b-a)}-b)^{-2} \leq (b-a)^{-2}
$.
We know $-e^{k(b-a)} \leq -1$, therefore we obtain
\begin{align*}
    &\nabla^{2}(g(k)) \leq \frac{-ab(b-a)^{2}}{(b-a)^{2}} = -ab \leq \frac{(b-a)^{2}}{4}
    \implies g(k) \leq \frac{(b-a)^{2}}{4}\frac{k^{2}}{2}.
\end{align*}
We know that
$
    \mathbb{E}[e^{kZ}] \leq e^{g(k)}
    \implies \mathbb{E}[e^{kZ}] \leq e^{\frac{k^{2}(b-a)^{2}}{8}}
$.
Once again from the Markov inequality, we have
\[P(Z > \epsilon) \leq e^{-k\epsilon}\mathbb{E}[kZ]\implies P(|\pi_{k}(i^{*}) - \pi^{*}(i^{*})| > \epsilon) \leq e^{-k\epsilon + \frac{k^{2}(b-a)^{2}}{8}}.\] 
Using $k = \frac{4\epsilon}{(b-a)^{2}}$, by solving for the minimum of
$e^{-k\epsilon + \frac{k^{2}(b-a)^{2}}{8}}$ we get
$P(|\pi_{k}(i^{*}) - \pi^{*}(i^{*})| > \epsilon) \leq e^{\frac{-2\epsilon^{2}}{(b-a)^{2}}}$. 
As $0 \leq (b-a) \leq 1$, we have
$P(|\pi_{k}(i^{*}) - \pi^{*}(i^{*})| > \epsilon) \leq e^{-2\epsilon^{2}}$ and, after $K$ time steps,
\[P(|\pi_{K}(i^{*}) - \pi^{*}(i^{*})| > \epsilon) \leq e^{-2K\epsilon^{2}}.\] Solving for $\epsilon$ we get, $\epsilon \leq \frac{-\log(P(|\pi_{K}(i^{*}) - \pi^{*}(i^{*})| > \epsilon))}{2K}$.
Thus, we draw the next optimal arm choice $i$ at $k+1$ as follows:
\[\argmax_{i}\bigg\{i \sim \pi_{k+1}(i) = \sigma(\Psi_{k}(i) + \frac{-\log(P(Z > \epsilon))}{2k})\bigg\},\] where $Z=|\pi_{k}(i^{*}) - \pi^{*}(i^{*})|$.
This confidence bound also has an intuitive form as it is reasonable that the expectation $\mathbb{E}(I(|\pi_{k}(i^{*}) - \pi^{*}(i^{*})| > \epsilon))$ gets closer to the truth (less probable) as more arms are pulled, where $I$ is the indicator function.
Since we never actually know $\pi^{*}(i^{*})$, we set to the current best arm choice. We expect that knowledge infusion will allow the error between the current best arm choice and $\pi^{*}(i^{*})$ to get smaller. As $P$ is usually initially set high and decayed as $k$ increases causing $\log(P)$ to increase, we achieve this effect by simply using $-\log(|\pi_{k}(i^{*}) - \pi^{*}(i^{*})|)$. Algorithm \ref{alg:KIPGUCB} shows how a simple modification to the pseudocode in Algorithm \ref{alg:BKIPG} can incorporate the bound derived. 

 \begin{algorithm}
\caption{KIPG Upper Confidence Bound - KIPGUCB}\label{alg:KIPGUCB}
\begin{algorithmic}[1]
\State Initialize $\Psi_{0}(i) = 0~\forall$ arms $i$
\For{$k \gets 1$ to $K$}
\State set $\pi_{k}(i) = \sigma(\Psi_{k-1}(i))$
\State Draw arm $i^{*} = \argmax_{i}i\sim\pi_{k}(i)$
\Comment observe reward $r_{k}(i^{*})$ and context $c_{k}(i^{*})$
\State Set $\pi^{*}(i^{*}) = I(\pi_{k}(i^{*}) = i^{*})$
\State Compute $\nabla_{\Psi_{k}(i^{*})}\log(\pi_{k}(i^{*}))$ as
\Comment{$\pm$ Depending on preference}
\begin{equation*}
    \begin{aligned}
    \bigg(I_{k}(i^{*})-\pi_{k}(i^{*}) \pm 1 - \frac{\log(|\pi_{k}(i^{*}) - \pi^{*}(i^{*})|)}{2k}\bigg)
    \end{aligned}
\end{equation*}
\State Compute gradient as $\pi_{k}(i^{*})\nabla_{\Psi_{k}(i^{*})}\log(\pi_{k}(i^{*}))(r_{k}(i^{*}) + 1)$
\Comment add $1$ smoothing
\State Fit $\delta_{k}(i^{*})$ to gradient using TILDE tree
\State Set $\Psi_{k}(i^{*}) = \Psi_{k-1}(i^{*}) + \eta\delta_{k}(i^{*})$
\EndFor
\State return $\pi_{K}(i)$
\end{algorithmic}
\end{algorithm}
\section{Experiments}{\label{sec:exp}}
The knowledge used in our experiments comes from domain experts, an example of which is seen in Section \ref{sec:four}. We aim to answer the following questions:
\begin{enumerate}
    \item How effective is the knowledge for bandit arm selection?
    \item How effective is the UCB exploration strategy for bandit arm selection?
\end{enumerate}
\subsection{Simulated Domains}
\paragraph{Simulation model: }We perform experiments on a simulated music recommendation dataset. The dataset simulates songs, artists, users, and albums where there are the following user behaviors:
\begin{itemize}
    \item Behavior A: The users are fans of one of the artists in the dataset.
    \item Behavior B: The users follow the most popular song.
    \item Behavior C: They follow the most popular artist.
\end{itemize}
We will denote the set of behaviors by $\mathbf{Behaviors}$.
Figure \ref{fig:schema}(b) shows an illustration for the Schema for the simulation model depicting that $M$ users can listen to $N$ songs and $N$ songs can be sung by $N$ artists, etc. Artists and Songs have attributes ``Popular'' denoting if a particular artist or a song is popular among users. 
\begin{figure*}[t]
    \centering
    \includegraphics[width=\textwidth]{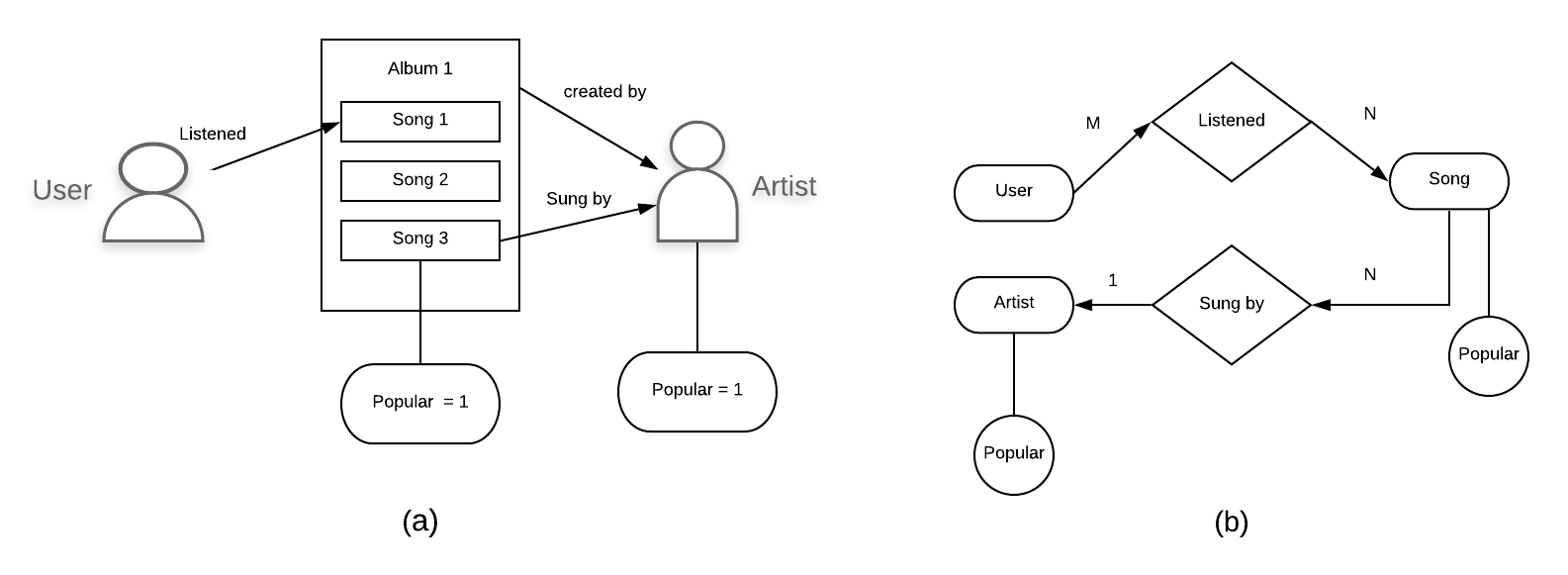}
    \caption{Illustration of the Entity-Relationship Schema diagram for the Music Recommendation system being simulated (b) and a particular instantiation (a). Users listen to songs by artists. $M$ Users can listen to $N$ Songs, $N$ Songs can be written by $1$ Artist and Artists and Songs can be popular among Users.}
    \label{fig:schema}
\end{figure*}
\paragraph{Context Induction: } Once the simulation model is used to generate different users based on a predefined behavior $\in \mathbf{Behaviors}$. We need now to generate different possible user contexts from this dataset. Since the whole dataset is not available to us offline, we construct a dataset by $50$ random arm choices to induce contexts. The contexts will be represented using predicate logic clauses: antecedent ($\land$ preconditions representing possible user context) $\implies$ consequent (user song choice). For this, an inductive bias needs to be provided to induce sensible clauses. Such an inductive bias is included as background knowledge to the induction program. We use the method in Hayes et al. \cite{hayes2017user} to automatically construct the inductive bias from the schema in Figure \ref{fig:schema}(b). The clauses induced are kept if they satisfy minimum information criteria i.e. if they discriminate at least one user from another in their song choice, in the dataset. The clauses induced using the provided inductive bias and are as follows:
\begin{itemize}
    \item sungBy(B,C) $\land$ $\lnot$ popular(C)$\implies$listens(A,B). This context says \emph{User A listens to song B if song B is sungBy artist C. Also, C is not a popular artist}, which describes \textbf{behavior A}.
    \item sungBy(B,C) $\land$ popular(C)$\implies$listens(A,B). This context says \emph{User A listens to song B if song B is sungBy a popular artist C}, which describes \textbf{behavior C}. 
    \item listened(C,B)$\implies$listens(A,B). This context says \emph{user A listens to song B if user C listened to B}, which describes \textbf{behavior B}.
\end{itemize}
We use satisfiability of these clause antecedents as features for TILDE regression tree stumps. Figure \ref{fig:stump} shows an example, where sigmoid of the regression values represents arm choice probability $\pi(i)$.
\begin{figure}[!h]
    \centering
    \includegraphics{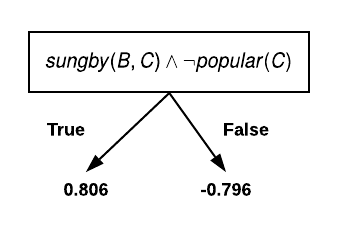}
    \caption{
    Example of a TILDE regression tree stump for song choice. The tree depicts that if \emph{if song B is sungBy artist C and also, C is not a popular artist}, \emph{User A listens to B} with probability $\sigma(0.806)$. Else, \emph{User A listens to B} with probability $\sigma(-0.796)$.}
    \label{fig:stump}
\end{figure}
\subsubsection{Results}
We compare the RB2 algorithm with KIPG and KIPGUCB. For each type of user, at time step $k$, a recommendation is provided depending on the algorithm used. The regret drawn from comparison to the ground truth (GT) recommendation is recorded. The regret equation for an algorithm $\mathcal{A}$ is:
\[R_{\mathcal{A}} = \sum_{k=1}^{K}(r^{GT} - \pi_{k}(i^{*})_{{\mathcal{A}}}r_{k}(i^{*})),\] 
where $i^{*}$ is the optimal arm drawn from $\argmax$ over $\pi(i)$ samples at step $k$ (See Algorithm \ref{alg:BKIPG},\ref{alg:KIPGUCB} - line 4). $r^{GT}$ is the reward if the ground truth optimal arm is drawn at $k$.
\paragraph{Perfect Knowledge: } The human providing knowledge may have some previous knowledge about a user in the system. In this case, it is expected that the knowledge is pretty good from the start. In this setting, we expect the regret is ordered as $R_{KIPG} < R_{KIPGUCB} < R_{RB2}$ for most $k = 1$ to $K$.We expected this trend since RB2 uses no knowledge and KIPGUCB moves slower towards knowledge initially. Given that the knowledge is perfect, we expect KIPG to perform the best. We set $K = 500$. Figure \ref{fig:perf} shows that the experiments corroborate this.

\begin{figure*}[!ht]
     \centering
     \begin{subfigure}[b]{0.3\textwidth}
         \centering
         \includegraphics[width=\textwidth]{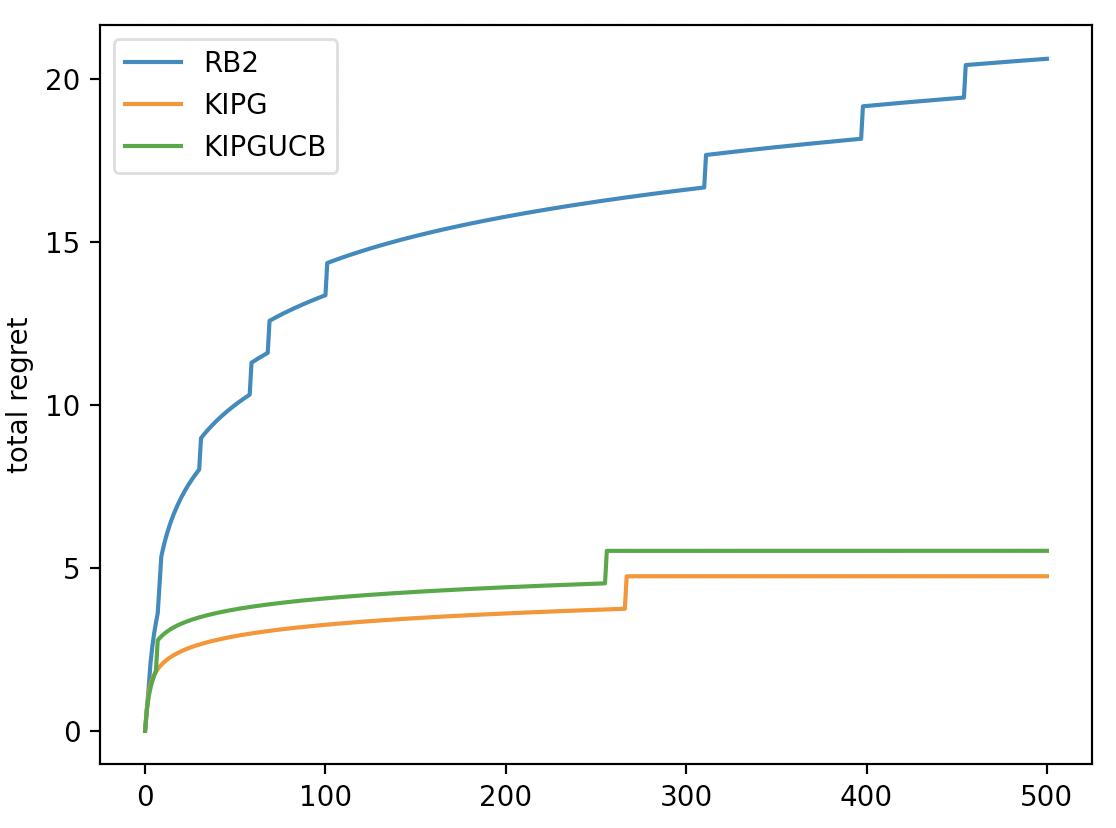}
         \caption{Behavior A}
         \label{fig:behap}
     \end{subfigure}%
     ~
     \begin{subfigure}[b]{0.3\textwidth}
         \centering
         \includegraphics[width=\textwidth]{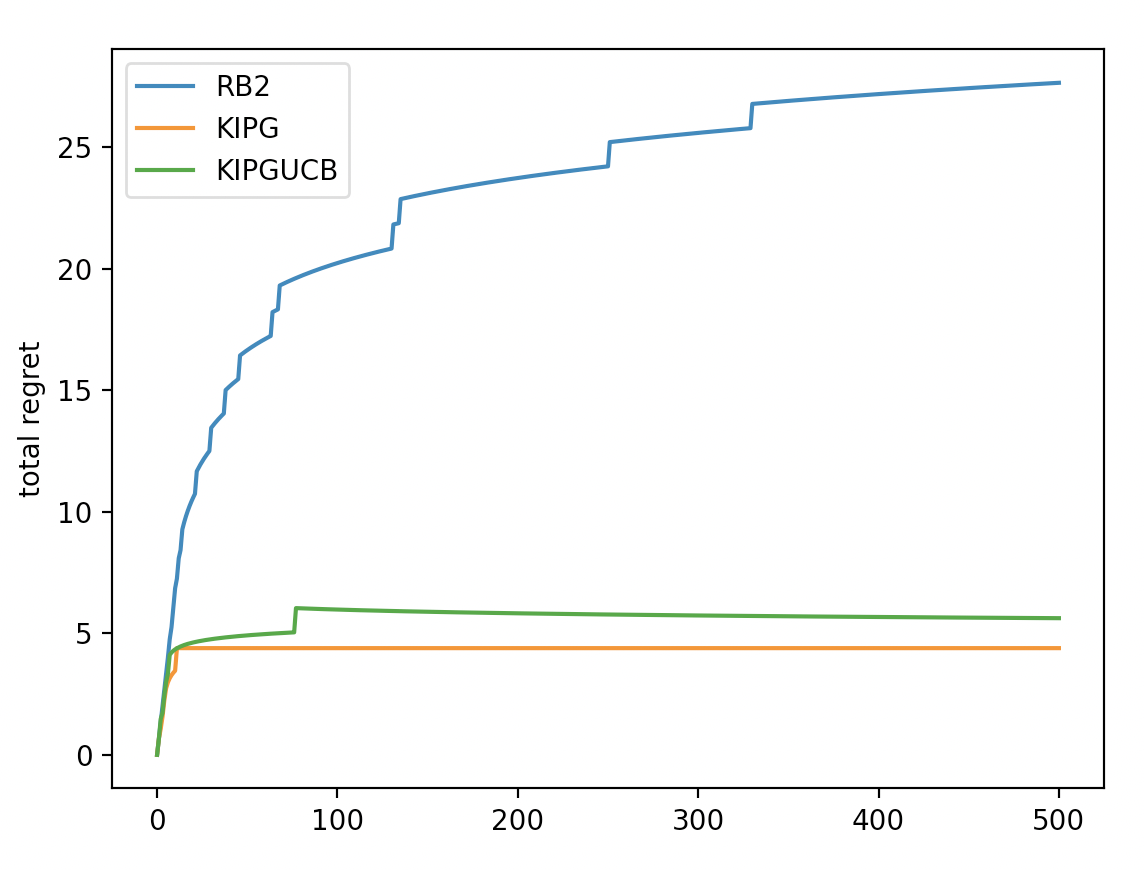}
         \caption{Behavior B}
         \label{fig:behbp}
     \end{subfigure}%
     ~
     \begin{subfigure}[b]{0.3\textwidth}
         \centering
         \includegraphics[width=\textwidth]{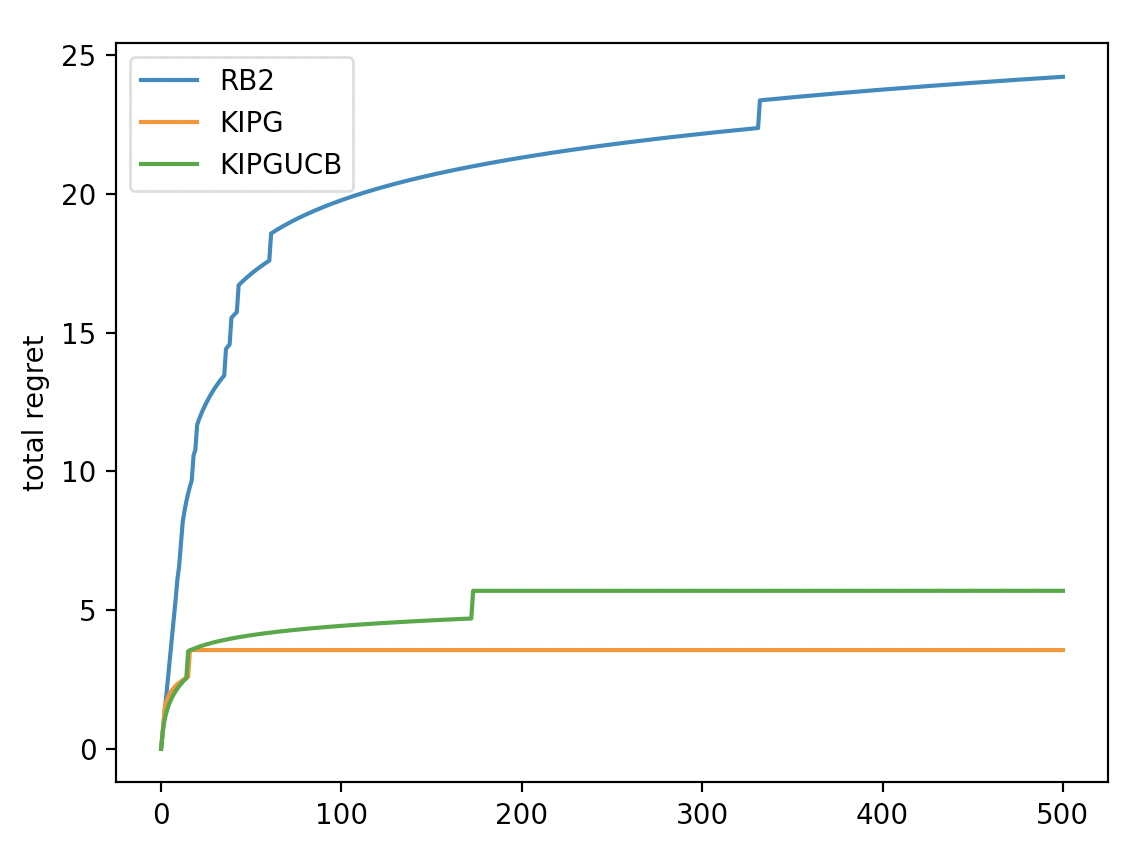}
         \caption{Behavior C}
         \label{fig:behcp}
     \end{subfigure}
    \caption{Shows comparison of $R_{RB2}$, $R_{KIPG}$, $R_{KIPGUCB}$ for the perfect knowledge setting for all their behaviors. As expected we see that $R_{KIPG} < R_{KIPGUCB} < R_{RB2}$ for most $k = 1~to~K$.}
        \label{fig:perf}
\end{figure*}

\begin{figure*}[!ht]
     \centering
     \begin{subfigure}{0.3\textwidth}
         \centering
         \includegraphics[width=\textwidth]{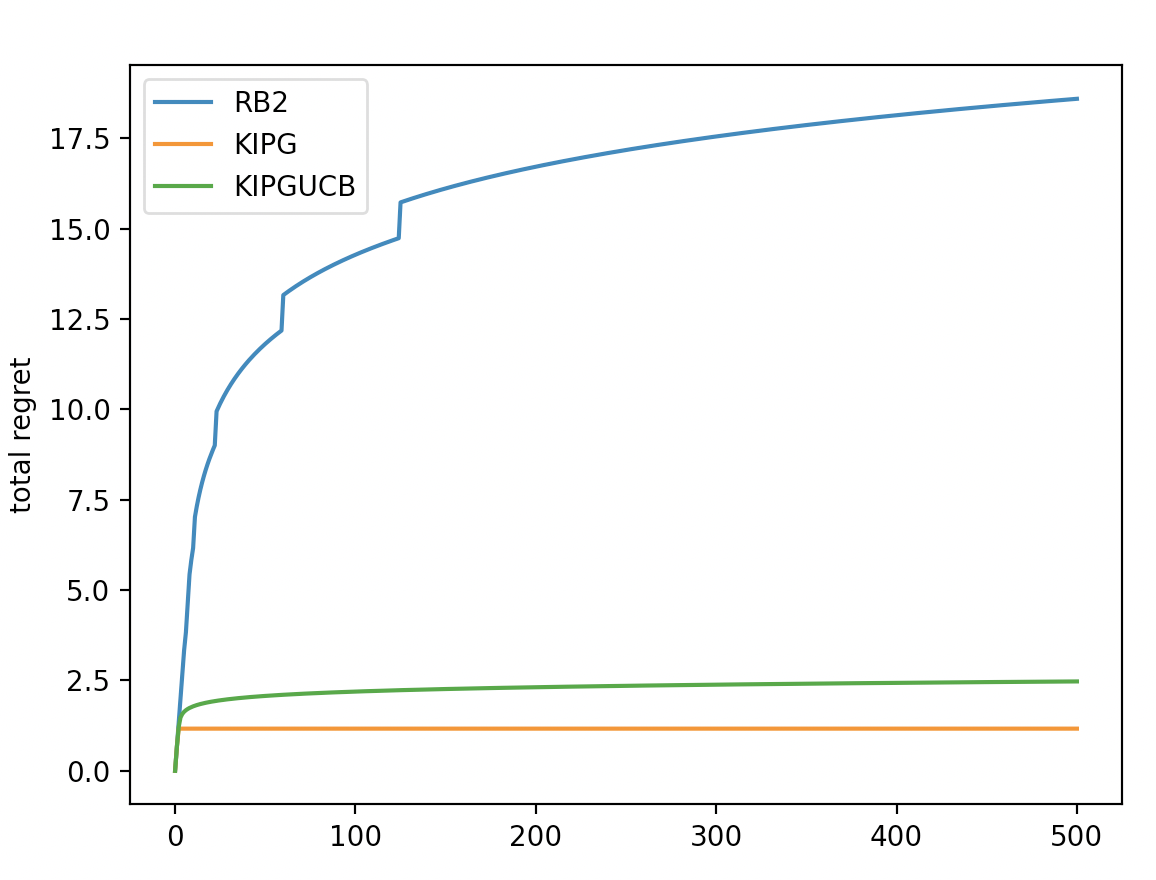}
         \caption{Behavior A}
         \label{fig:behanp}
     \end{subfigure}%
     ~
     \begin{subfigure}{0.3\textwidth}
         \centering
         \includegraphics[width=\textwidth]{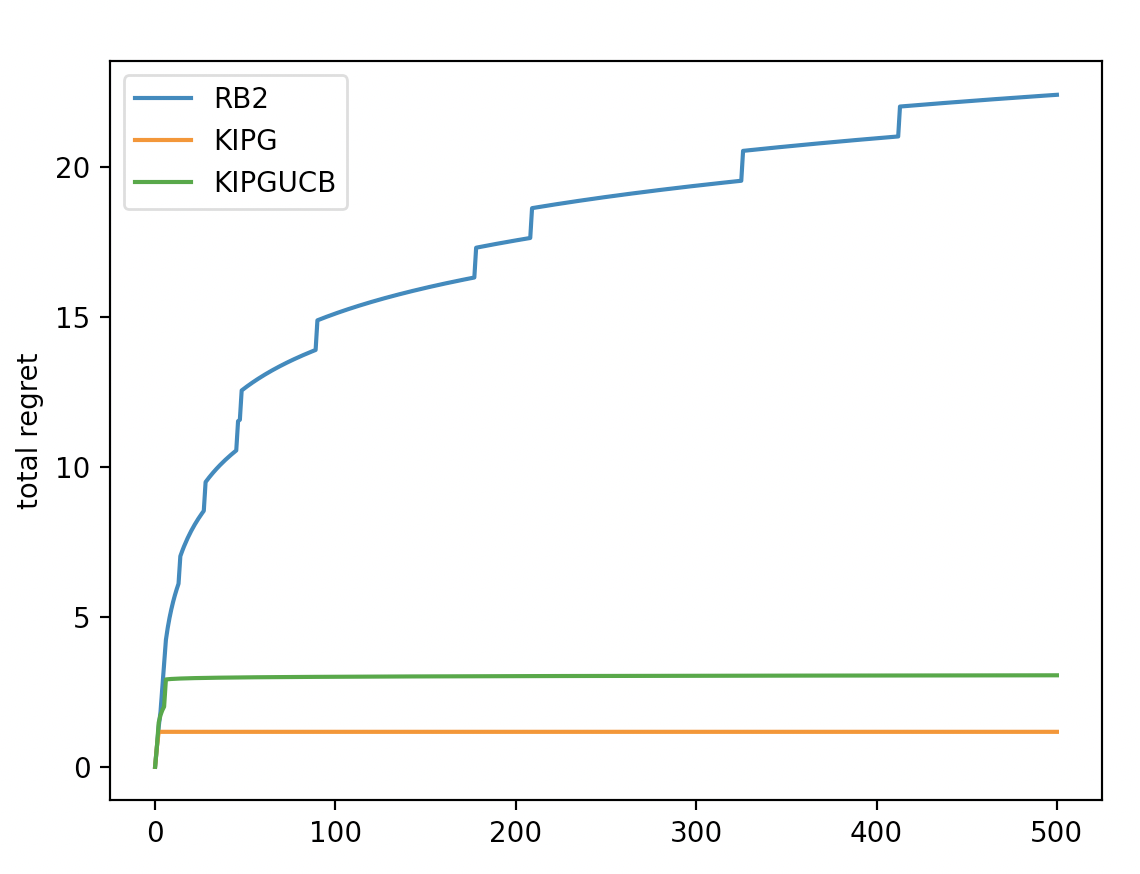}
         \caption{Behavior B}
         \label{fig:behbnp}
     \end{subfigure}%
     ~
     \begin{subfigure}{0.3\textwidth}
         \centering
         \includegraphics[width=\textwidth]{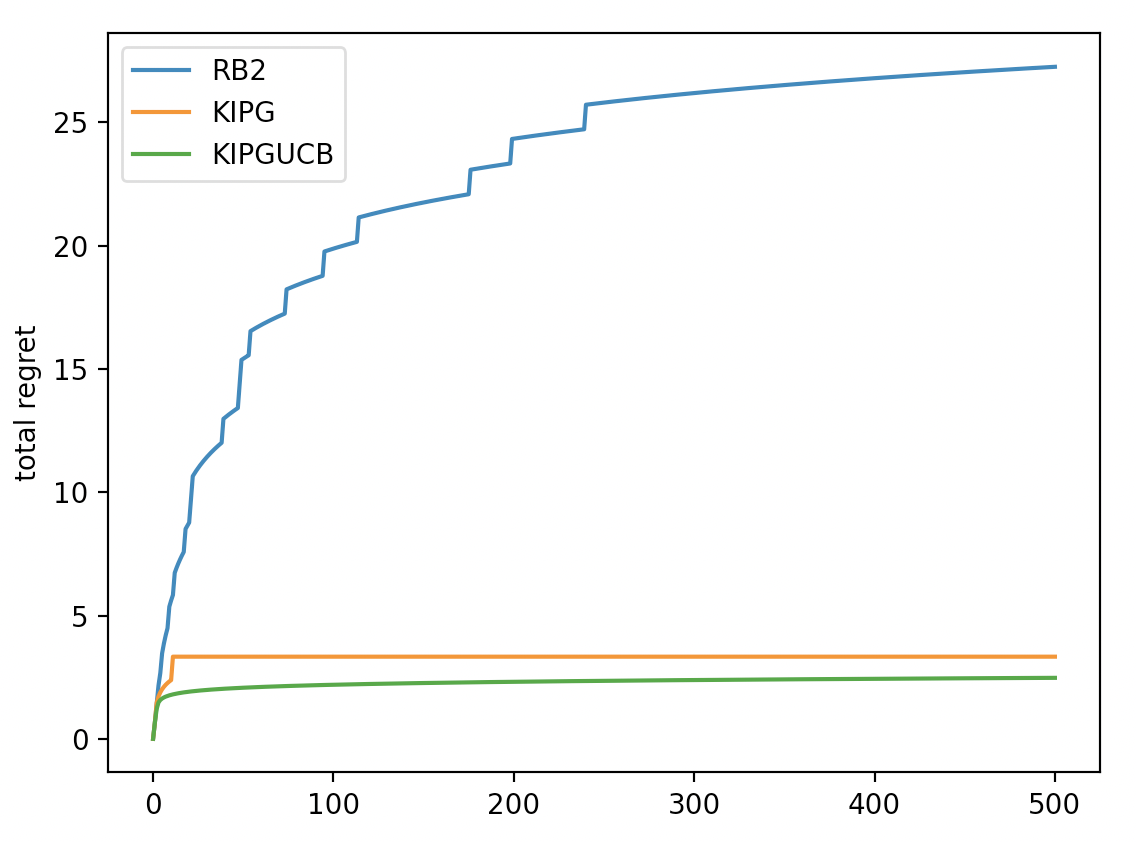}
         \caption{Behavior C}
         \label{fig:behcnp}
     \end{subfigure}
    \caption{Shows comparison of $R_{RB2}$, $R_{KIPG}$, $R_{KIPGUCB}$ for the nearly perfect knowledge setting for all three behaviors. As expected we see that $R_{KIPG} < R_{KIPGUCB} < R_{RB2}$ for most $k = 1~to~K$.}
        \label{fig:nperf}
\end{figure*}

\begin{figure*}[!ht]
     \centering
     \begin{subfigure}{0.3\textwidth}
         \centering
         \includegraphics[width=\textwidth]{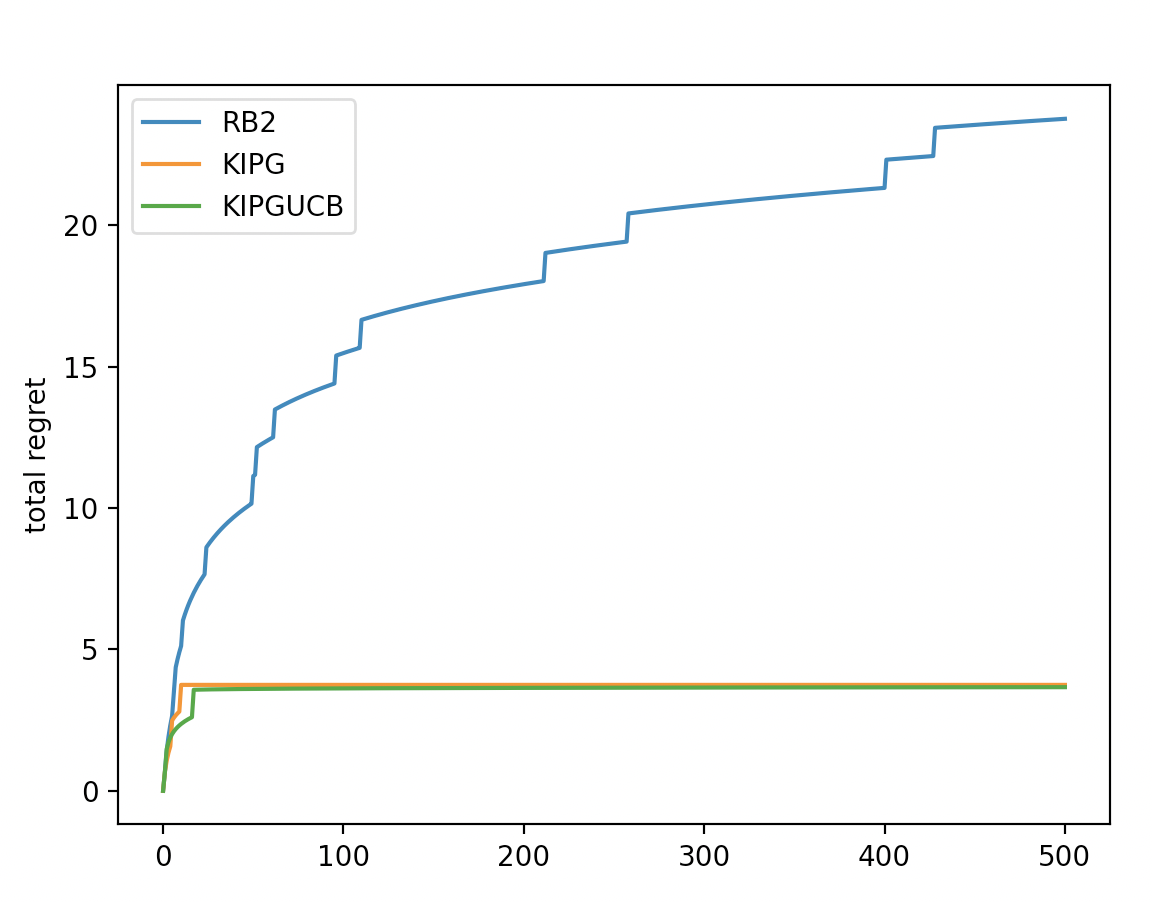}
         \caption{Behavior A}
         \label{fig:behan}
     \end{subfigure}%
     ~
     \begin{subfigure}{0.3\textwidth}
         \centering
         \includegraphics[width=\textwidth]{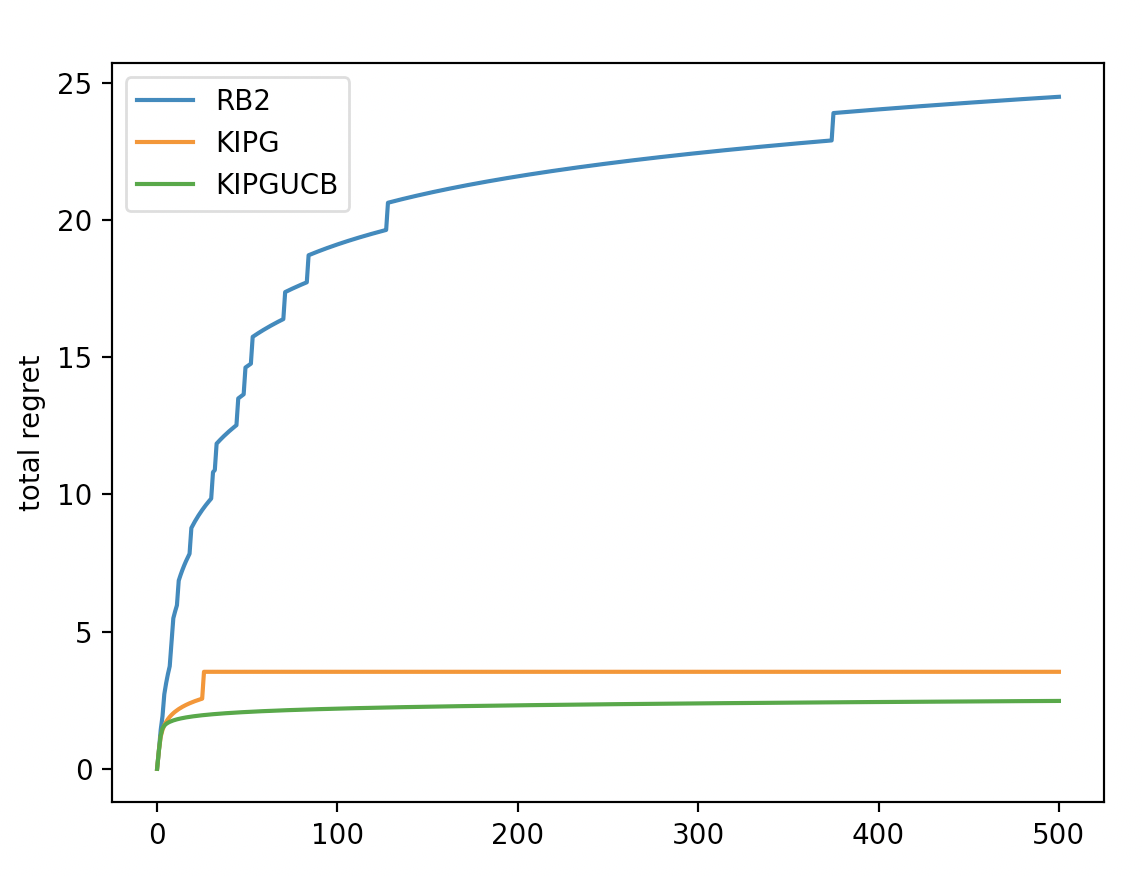}
         \caption{Behavior B}
         \label{fig:behbn}
     \end{subfigure}%
     ~
     \begin{subfigure}{0.3\textwidth}
         \centering
         \includegraphics[width=\textwidth]{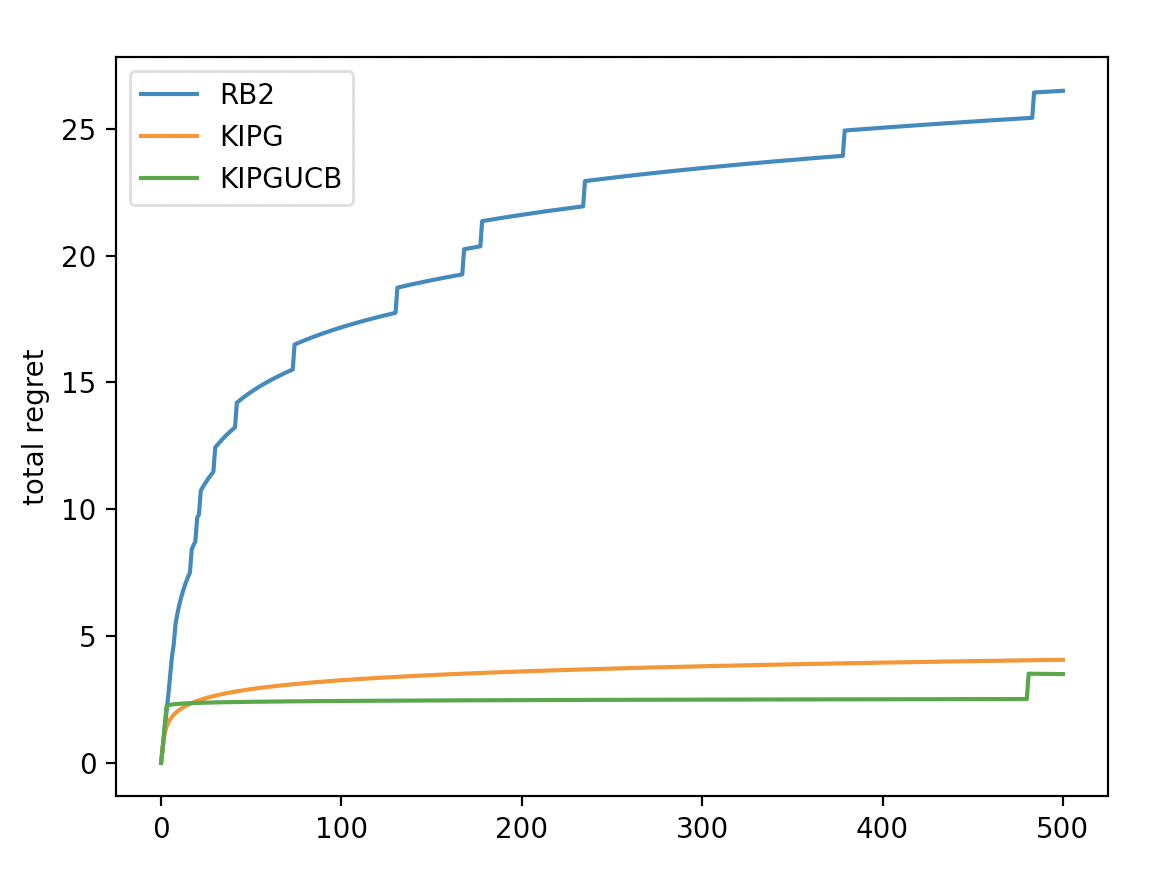}
         \caption{Behavior C}
         \label{fig:behcn}
     \end{subfigure}
    \caption{Shows comparison of $R_{RB2}$, $R_{KIPG}$, $R_{KIPGUCB}$ for the noisy knowledge setting for all three behaviors. As expected we see that $R_{KIPGUCB} < R_{KIPG} < R_{RB2}$ for most $k = 1~to~K$.}
        \label{fig:noisy}
\end{figure*}
\paragraph{Noisy Knowledge: } In this setting the human again observes some user arm interactions to improve the knowledge that they provide. In this case however, the humans observation skills are less sharp. We simulate this scenario by using noisy knowledge for $k=1~to~50$, where perfect knowledge is provided $60\%$ of the time instead of $80\%$. Here, we expect that for most $k=1 to K$, where $K=500$, $R_{KIPGUCB} < R_{KIPG} < R_{RB2}$. We expect this as a perfection rate of $60\%$ means that the tempering of \emph{Knowledge Infusion} by KIPGUCB initially leads to better total regret for KIPGUCB. Figure \ref{fig:noisy} shows this result.
\subsection{Real-World Datasets}
We also evaluate the algorithms in the following real-world datasets:
\begin{itemize}
    \item The Movie Lens dataset with relations such as user age, movietype, movie rating, etc, where the arm label is the genre of a movie. The dataset has $166486$ relational instances \cite{motl2015ctu}.
    \item The Drug-Drug Interaction (DDI) dataset with relations such as Enzyme, Transporter, EnzymeInducer, etc, where the arm label is the interaction between two drugs. The dataset has $1774$ relational instances \cite{dhami2018drug}.
    \item The ICML Co-author dataset with relations such as affiliation, research interests, location, etc, where the arm label represents whether two persons worked together on a paper. The dataset has $1395$ relational instances \cite{dhami2020non}.
    \item The IMDB dataset with relations such as Gender, Genre, Movie, Director, etc, where the arm label is WorkUnder, i.e., if an actor works under a director. The dataset has $938$ relational instances \cite{mihalkova2007bottom}.
    \item The Never Ending Language Learner (NELL) data set with relations such as players, sports, league information, etc, where the arm label represents which specific sport does a particular team plays. The dataset has $7824$ relational instances \cite{mitchell2018never}.
\end{itemize}

\begin{figure*}[!ht]
    \centering
    \includegraphics[width=1.1\textwidth]{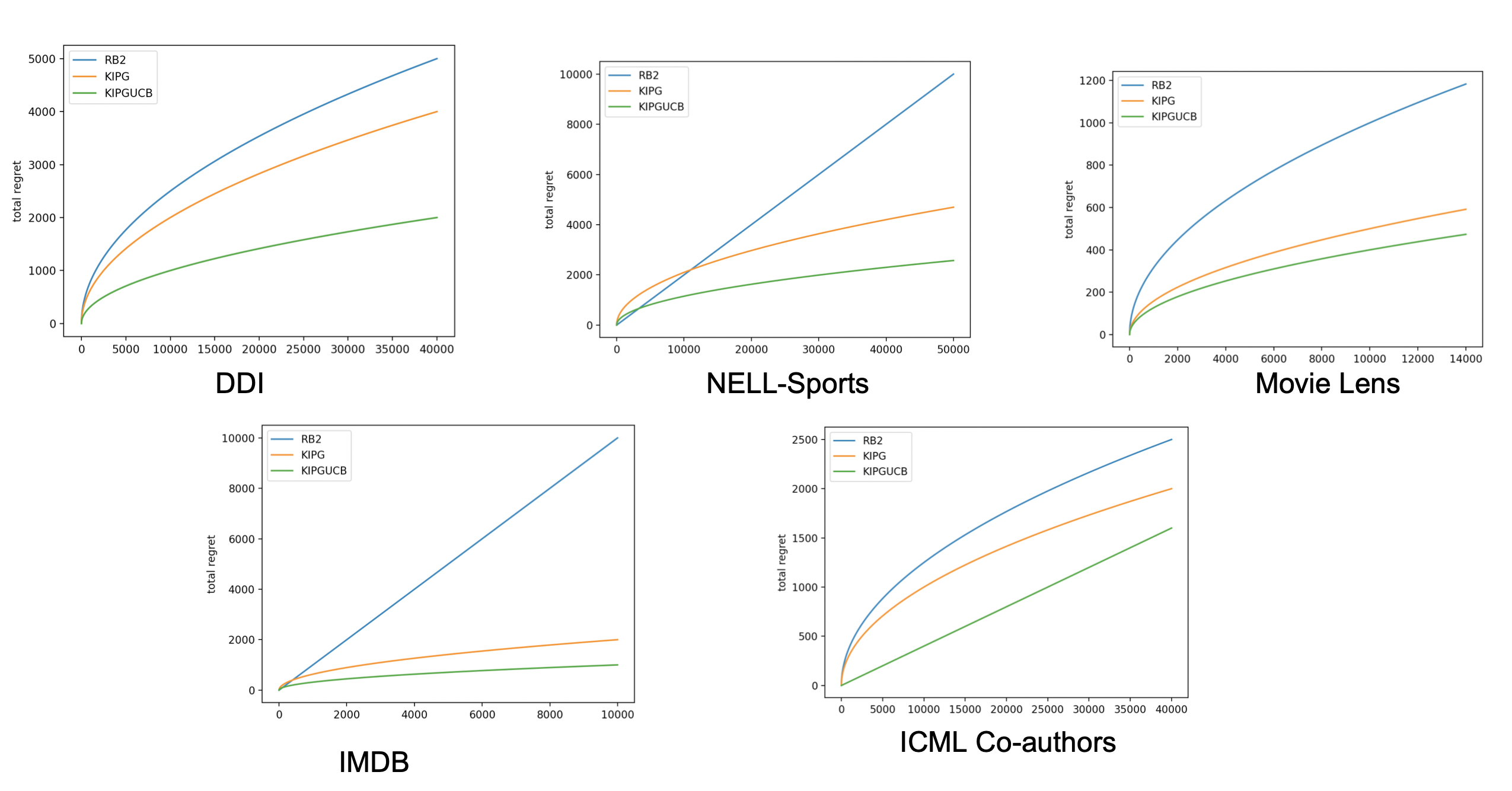}
    \caption{Performance plots computed using total regret of RB2, KIPG, and KIPGUCB for the datasets for $k=1~to~K$. We see that KIPG and KIPG-UCB perform significantly better with expert knowledge in Movie Lens and IMDB compared to others. This is because it is relatively easier for an expert to provide knowledge in these domains.
    On the contrary, 
    in the NELL-Sports, because of noisy knowledge, initially, the performance of KIPGUCB dips compared to RB2, but it increased thereafter.}
    \label{fig:results}
\end{figure*}
 We used $10$ boosted trees for all the experiments and results are averaged over $5$ runs. It is seen that while the total regret remains high for all the datasets over several steps of learning, both the expert knowledge and the exploration strategy using the UCB method are effective in increasing performance. The performance increase is more pronounced in the Movie Lens and IMDB datasets as the expert knowledge are relatively easier to provide for human experts. For the DDI dataset and the ICML Co-authors dataset, it is not straightforward to specify which drugs might interact or which authors may work together in a diverse academic setting. Since the knowledge comes from an expert and systematically targets faster convergence to the optimal distribution, knowledge infusion is expected to perform better. If the knowledge were noisy, the error accumulation over time may have lead to sub-optimal results. In the NELL-sports dataset, it can be seen that RB2 initially outperforms both KIPG and KIPGUCB.
\section{Conclusion and Future Work}
In this study, we develop a novel algorithm KIPGUCB to perform knowledge infusion in CB settings. We show that the regret bound depends on the knowledge and hence the total regret can be reduced if the right knowledge is available. Furthermore, we develop a confidence bound to account for initial uncertainty in provided knowledge in online settings. Though we have developed a general framework for knowledge infusion, we have yet to explore knowledge forms beyond preference knowledge. Furthermore, the knowledge may depend on latent behaviors that cannot be modeled such as a bias by an actor towards a particular director. Also, the actor's bias towards directors may keep changing as more data is seen. This type of non-stationarity and partial observability in context will be interesting to model. Also, if knowledge is noisy and fails to lower total regret, identifying the right descriptive question to ask the human to elicit new knowledge is an interesting future direction. Relational descriptions make tackling this issue plausible. Finally, it will be interesting to mathematically evaluate when the knowledge should be incorporated at all. We aim to tackle these issues in future work.

\bibliographystyle{unsrt}
\bibliography{references}
\end{document}